\icmltitlerunning{Fast OSCAR and OWL Regression via Safe Screening Rules}
\begin{document}

\twocolumn[
\icmltitle{Fast OSCAR and OWL Regression via Safe Screening Rules}

\begin{icmlauthorlist}
\icmlauthor{Runxue Bao}{to}
\icmlauthor{Bin Gu}{ed}
\icmlauthor{Heng Huang}{to,ed}
\end{icmlauthorlist}

\icmlaffiliation{to}{Department of Electrical and Computer Engineering, University of Pittsburgh, Pittsburgh, USA}
\icmlaffiliation{ed}{JD Finance America Corporation}

\icmlcorrespondingauthor{Heng Huang}{heng.huang@pitt.edu}

\icmlkeywords{Feature Screening, Ordered  Weighted Sparse Regression}

\vskip 0.3in
]



\printAffiliationsAndNotice{} 

\begin{abstract}
Ordered Weighted $L_{1}$ (OWL) regularized regression is a new regression analysis for high-dimensional sparse learning. Proximal gradient methods are used as standard approaches to solve OWL regression. However, it is still a burning issue to solve OWL regression due to considerable computational cost and memory usage when the feature or sample size is large. In this paper, we propose the first safe screening rule for OWL regression by exploring the order of the primal solution with the unknown order structure via an iterative strategy, which overcomes the difficulties of tackling the non-separable regularizer. It effectively avoids the updates of the parameters whose coefficients must be zero during the learning process. More importantly, the proposed screening rule can be easily applied to standard and stochastic proximal gradient methods. Moreover, we prove that the algorithms with our screening rule are guaranteed to have identical results with the original algorithms. Experimental results on a variety of datasets show that our screening rule leads to a significant computational gain without any loss of accuracy, compared to existing competitive algorithms.
\end{abstract}

\section{Introduction}
OWL regression \cite{bogdan2013statistical,zeng2014decreasing,bogdan2015slope,figueiredo2016ordered,bao2019efficient} has emerged as a useful procedure for high-dimensional sparse regression recently, which can promote the sparsity and grouping simultaneously. Unlike group Lasso \cite{yuan2006model} and its variants, OWL regression can identify precise grouping structures of strongly correlated covariates automatically during the learning process without any prior information of feature groups. Remarkably, \cite{bu2019algorithmic} concluded that it has two good properties to achieve the minimax estimation from the estimation side without any prior knowledge of coefficients \cite{su2016slope,bellec2018slope} and controls the false discovery rate from the testing side \cite{bogdan2015slope,brzyski2019group}, which do not simultaneously exist in other models such as Lasso \cite{tibshirani1996regression} and knockoffs \cite{barber2015controlling}. Owing to its effectiveness, OWL is widely used in various kinds of applications, \emph{e.g.}, gene expression \cite{bogdan2015slope}, brain networks \cite{oswal2016representational} and neural networks training \cite{zhang2018learning}.

Although proximal gradient methods are used as standard approaches \cite{bondell2008simultaneous,bogdan2015slope} to solve OWL regression, it still suffers from high computational cost and memory usage when the feature or sample size is large in practice. The main bottleneck is the computation to update the solution in each iteration depends on all the data points. The screening technique is an easy-to-implement and promising approach for accelerating the training of sparse learning models by eliminating the features whose coefficients must be zero, which can safely avoid these useless computation during the whole training process.

\begin{table*}[t]
	\center
	\vspace*{-5pt}
	\caption{Representative safe screening algorithms. ``Type of screening'' represents the algorithm screening samples or features. ``Size'' represents the number of the hyperparameters in regularization where d is feature size. ``Fixation'' represents whether the regularization hyperparameter of each variable is fixed during the learning process. }
    \vskip 0.15in
	\begin{tabular}{c|c|c|c|c}
		\hline
		{\textbf{Problem}}  & {\textbf{Type of screening}} & {\textbf{Size}} & {\textbf{Separability}} & {\textbf{Fixation}}  \\
		\hline
		Lasso \cite{liu2014safe} & features & 1 & Separable  & Fixed    \\
		Lasso \cite{fercoq2015mind} & features & 1 & Separable  & Fixed    \\
		Sparse SVM \cite{shibagaki2016simultaneous}  & features and samples  & 3 & Separable  & Fixed  \\ 
		Sparse-group Lasso \cite{ndiaye2016gap} & features & 2  & Separable   & Fixed  \\
		Sparse SVM \cite{zhang2017scaling}  & features and samples  & 3 & Separable  & Fixed  \\
		Proximal Weighted Lasso \cite{rakotomamonjy2019screening} & features & d & Separable   & Fixed    \vspace*{0.2pt} \\
		\hline
		OWL regression (Ours) & features & d & Non-separable  & Unfixed  \\
		\hline
	\end{tabular}
	\label{table:methods}
	\vspace*{-5pt}
\end{table*}

The safe screening rules introduced by \cite{Laurent2012safe} for generalized $l_{1}$ regularized problems eliminate features whose associated coefficients are proved to be zero at the optimum. The screening rule in \cite{Laurent2012safe} is called static safe rules, which is only performed once, prior to any optimization algorithm. Relaxing the safe rules, heuristic strategies, called strong rules \cite{tibshirani2012strong}, reduce the computational cost using an active set strategy at the price of possible mistakes, which requires difficult post-processing to check for features possibly wrongly discarded. Another road to screening method is called sequential safe rules \cite{wang2013lasso,xiang2016screening}. The sequential screening rule relies on the exact dual optimal solution, which could be very time-consuming and lead to be unsafe in practice. Recently, the introduction of safe dynamic rules \cite{fercoq2015mind} has opened a promising venue by conducting safe screening not only at the beginning of the algorithm, but also during the learning process. Following \cite{fercoq2015mind} for Lasso, many dynamic screening rules relying on the duality gap are proposed in \cite{shibagaki2016simultaneous,ndiaye2016gap,rakotomamonjy2019screening,zhai2019safe} for a broad class of sparse learning problems with both good empirical and theoretical results. 

This work is concerned with algorithmic acceleration of OWL regression through safe screening rules to safely avoid useless computation whose parameters must be zero during the training process without any influence on the final learned model. We summarized several representative safe screening algorithms in Table \ref{table:methods}. It shows that existing safe screening rules have been widely used to accelerate algorithms in sparse learning by screening useless samples or features while all of them are limited to separable penalties and the fixed regularization hyperparameter of each variable, which is essential to derive the screening rules. So far there are still no safe screening rules proposed for OWL regression. This vacuum is because OWL penalty is non-separable, meaning it cannot be written as  $\Omega_{\lambda}(\beta) = \sum_{i=1}^d\lambda_{i}\omega(\beta_{i})$. Thus, all the hyperparameters for each variable in OWL penalty are unfixed until we finish the whole learning process while they are fixed in other models at the initial stage. Besides, how to derive an efficient screening rule with the numerous hyperparameters is another key point to be considered. Because of the challenges to derive screening rules for the non-separable OWL penalty with numerous unfixed hyperparameters, speeding up OWL regression by screening rules is still an open and challenging problem.

To address these challenges, in this paper, we propose a safe screening rule for the linear regression with the family of OWL regularizers based on the intermediate duality gap, which is significantly helpful for accelerating the training algorithms. As far as we know, this work is the first attempt in this direction. We effectively overcome the difficulties caused by the non-separable penalty by exploring the order of the primal solution with the unknown order structure via an iterative strategy, which leads to better understanding of the non-separable penalty for future. Specifically, in high-dimensional tasks, as the size of non-zero coefficients is much smaller than the size of features, our screening rule can effectively identify the features whose parameters must be zero in each iteration and then accelerate the original algorithms by skipping the useless updates of these parameters. Theoretically, we not only rigorously prove that our screening rule is safe for the whole training process, but also prove that our screening rule can be safely applied to existing standard iterative optimization algorithms both in the batch and stochastic setting without any loss of accuracy. The empirical performance shows the superiority of our algorithms with significant computational gain to the most popular proximal gradient methods, \emph{e.g.}, APGD (accelerated proximal gradient descent) algorithm and SPGD (stochastic proximal gradient descent with variance reduction) algorithm.

\section{Preliminary}
\subsection{OWL Regularized Regression}
We consider the linear regression with the family of OWL norms by solving the minimization problem as follows:
\begin{eqnarray} \label{primal}
    \min\limits_{\beta}  P_{\lambda}(\beta):=\frac{1}{2}\|y-X\beta\|^2_{2} +\sum_{i=1}^d\lambda_{i}|\beta|_{[i]},
\end{eqnarray}
where $X = [x_{1},x_{2},\ldots,x_{d}] \in \mathbb{R}^{n \times d}$ is the design matrix, $y \in \mathbb{R}^{d}$ is the measurement vector, $\beta$ is the unknown coefficient vector of the model, $\lambda = [\lambda_{1}, \lambda_{2}, \ldots, \lambda_{d}]$ is a non-negative regularization parameter vector of $d$ non-increasing weights and $|\beta|_{[1]} \geq |\beta|_{[2]} \geq \ldots \geq |\beta|_{[d]} $ are the ordered coefficients in absolute value. Each feature has a corresponding regularization parameter. OWL penalty (denoted as $\Omega_{\lambda}(\beta)$ henceforth) penalizes the coefficients according to their magnitude: the larger the magnitude, the larger the penalty. OWL regression has been shown to outperform conventional Lasso in many applications, particularly when $\beta$ is sparse and $d$ is larger than $n$ \cite{bogdan2015slope}. \cite{zeng2014decreasing,figueiredo2016ordered} provided theoretical analysis of the sparsity and grouping properties of OWL penalty for sparse linear regression tasks with strongly correlated features. 

Note that OWL regression is a general form of a set of sparse learning models. For example, Lasso is a special case of (\ref{primal}) if $\lambda_{1} = \lambda_{2} = \ldots = \lambda_{d}$, where $\lambda_{i}>0$. $L_{\infty}$-norm regression is a special case of (\ref{primal}) if $\lambda_{1} > 0$ and $ \lambda_{2} = \ldots = \lambda_{d} = 0$. OSCAR \cite{bondell2008simultaneous} is another special case of (\ref{primal}) if $\lambda_{i} = \alpha_{1} + \alpha_{2}(d - i)$, where $\alpha_{1}$ and $\alpha_{2}$ are non-negative parameters. 

We get the Fermat's rule of OWL regression by subdifferentials \cite{kruger2003frechet,mordukhovich2006frechet} as follows:
\begin{eqnarray}  \label{subdiff}
    X^\top(y-X\beta^{*}) \in \partial \Omega_{\lambda}(\beta^{*}), 
\end{eqnarray}
where $\beta^*$ is the optimum of the primal and $\partial \Omega_{\lambda}(\beta^{*})$ is the subdifferential of $\Omega_{\lambda}(\beta^{*})$. 

Thank \cite{Elvira2021techreport} for pointing out the error of the optimality conditions in the previous version \cite{DBLP:journals/corr/abs-2006-16433}, we introduce a new formulation to correct it here. Denote the active set and the inactive set at the optimal as $\A^*$ and $\A'^*$ respectively, we know   $\A^*$  and $\A'^*$ is a partition of $\{1, 2, \ldots, d \}$ and derive the optimality conditions of OWL regression as follows:
\begin{eqnarray} 
    X_{\A^*}^\top(y-X\beta^{*})   \in \partial \Omega_{\lambda_{\A^*}}(\beta^{*}_{\A^*}), \\
      X_{\A'^*}^\top(y-X\beta^{*})   \in \partial \Omega_{\lambda_{\A'^*}}(\beta^{*}_{\A'^*}),
\end{eqnarray}
where $\lambda_{\A^*} = [\lambda_{1},  \ldots, \lambda_{|\A^*|}]$  and $\lambda_{\A'^*} = [\lambda_{|\A^*| + 1}, \ldots, \lambda_{d}]$ is a partition of $\lambda$.

\subsection{Proximal Gradient Methods}
Proximal gradient methods are used as standard approaches to solve OSCAR and OWL regression. However, a major drawback is that it has slow convergence. Thus, accelerated proximal gradient methods are proposed to solve the optimization problems with the non-smooth penalty. Inspired by FISTA (Fast Iterative Shrinkage-Thresholding Algorithm) \cite{beck2009fast}, \cite{zhong2012efficient} proposed an APGD algorithm to solve OSCAR by efficiently addressing the proximal operator. Further, \cite{bogdan2015slope} proposed an APGD algorithm to solve the general OWL regression with the proximal operator as:
\begin{eqnarray} 
    \prox(y,\lambda) := \argmin\limits_{x \in \mathbb{R}^{d} }  \frac{1}{2}\|y-x\|^2_{2} +\sum_{i=1}^d\lambda_{i}|x|_{[i]}.
\end{eqnarray}

Nevertheless, APGD algorithm still suffers from high computational costs and memory burden when either the size of features or samples is large. Specifically, the computation of each proximal step above takes $O(d \log d)$. The computational cost of APGD algorithm for each iteration is $O(d(n+\log d))$. 

Further, as an update of each iteration in APGD algorithm depends on all the samples, each iteration of APGD algorithm can be very expensive in large-scale learning since it requires the computation of full gradients. In large-scale learning,  SPGD algorithm is proposed in \cite{xiao2014proximal} as an effective alternative, which only requires the gradients of the samples of a mini-batch size each time.

\begin{remark}
In practice, OWL regression is typically performed in the high-dimensional setting. Hence, APGD and SPGD algorithms usually suffer from high computational costs and memory burden for large feature size $d$. Thus, it is important and promising to speed up OWL regression by the screening technique for both APGD and SPGD algorithms.
\end{remark}

\section{Screening Rule}
In this section, we first provide the dual formulation of OWL regression and then derive the screening test based on the dual formulation. Next, we provide safe screening rules for OWL regression.

\subsection{Dual of OWL Regression}
In this part, we derive the dual problem of OWL regression and the screening test for OWL regression.

We consider the primal objective (\ref{primal}) of OWL regression, which is convex, non-smooth and non-separable. Following the derivation of $l_{1}$ regularized regression in appendix E of \cite{johnson2015blitz}, let $a_{i} = X^\top_{i,:}$ and $f_{i}(z_{i}) = \frac{1}{2}(y_{i}-z_{i})^2$, we can derive the dual of  OWL regression as follows:
\begin{subequations}
\begin{small}
\begin{eqnarray} 
\!\!&  &\!\! \min\limits_{\beta}  \frac{1}{2}\|y-X\beta\|^2_{2}
    +\sum_{i=1}^d\lambda_{i}|\beta|_{[i]} \label{yy} \\
\!\!& = &\!\!  \min\limits_{\beta}  \frac{1}{2} \sum_{i=1}^{n} (y_{i}-a_{i}^\top\beta)^2 +\sum_{i=1}^d\lambda_{i}|\beta|_{[i]} \nonumber \\
\!\!& = &\!\!  \min\limits_{\beta}   \sum_{i=1}^{n} f_{i}(a_{i}^\top\beta) +\sum_{i=1}^d\lambda_{i}|\beta|_{[i]} \nonumber \\
\!\!& = &\!\! \!\!\!\!\ \min\limits_{\beta}   \sum_{i=1}^{n} f^{**}_{i}(a_{i}^\top\beta) +\sum_{i=1}^d\lambda_{i}|\beta|_{[i]} \nonumber \\
\!\!& = &\!\! \min\limits_{\beta}   \sum_{i=1}^{n} \max\limits_{\theta_{i}} [(a_{i}^\top\beta)\theta_{i}-f^{*}_{i}(\theta_{i})]  +\sum_{i=1}^d\lambda_{i}|\beta|_{[i]} \nonumber \\
\!\!& = &\!\! \min\limits_{\beta}  \max\limits_{\theta} - \sum_{i=1}^{n} f^{*}_{i}(\theta_{i}) + \beta^\top X^\top\theta +\sum_{i=1}^d\lambda_{i}|\beta|_{[i]} \nonumber \\
\!\!& = &\!\! \max\limits_{\theta} - \sum_{i=1}^{n} f^{*}_{i}(\theta_{i}) + \min\limits_{\beta} \beta^\top X^\top\theta +\sum_{i=1}^d\lambda_{i}|\beta|_{[i]}
         \label{maximization} \\
\!\!& = &\!\! \max\limits_{X^\top\theta \in C_\lambda}  \sum_{i=1}^{n} - f^{*}_{i}(\theta_{i})  \label{dual} \\
\!\!& = &\!\! \max\limits_{X^\top\theta \in C_\lambda}  - \frac{1}{2}\|\theta\|^2_{2} - \theta^\top y, \label{dual2}
\end{eqnarray}
\end{small}
\end{subequations}
where $\theta$ is the solution of the dual and $\beta \in C_\lambda$ means $\sum_{j \leq i} |\beta|_{[j]} \leq \sum_{j \leq i} \lambda_j $ for all $i = 1, \ldots, d$. 

Note that $f^{*}_{i}$ is the convex conjugate of function $f_{i}$ as:
\begin{eqnarray} 
f^{*}_{i}(\theta_{i}) = \max\limits_{z_{i}} \theta_{i} z_{i}-f_{i}(z_{i}). 
\end{eqnarray}

The penultimate step to derive the dual uses the optimality condition of the following problem:  
\begin{eqnarray} \label{optimality_origin}
\min\limits_{\beta} \beta^\top X^\top\theta +\sum_{i=1}^d\lambda_{i}|\beta|_{[i]}.
\end{eqnarray}

Suppose the order of $\beta^*$ is known, the optimality conditions of (\ref{optimality_origin}) are as follows:
\begin{eqnarray} 
    -X_{\A^*}^\top\theta^*   \in \partial \Omega_{\lambda_{\A^*}}(\beta^{*}_{\A^*}),
    \label{constraint0}
    \\
      -X_{\A'^*}^\top\theta^*   \in \partial \Omega_{\lambda_{\A'^*}}(\beta^{*}_{\A'^*}),  \label{constraint}
\end{eqnarray}
where $\theta^*$ is the optimum of the dual, which can be transformed as the constraints in (\ref{dual}). Hence, we get the dual formulation of OWL regression as above.

Suppose the optimum primal and dual solutions are known, we can derive the screening condition for each variable from the optimality condition (\ref{constraint0}) and (\ref{constraint}) as: 
\begin{eqnarray} \label{condition}
    |x^\top_{i} \theta^* | < \lambda_{|\A^*|} \Rightarrow \beta^*_{i} = 0,
\end{eqnarray}
to identify the variables whose coefficient must be zero. Then, in the latter training process, we can train the model with less parameters and features while keeping the same output. However, the optimum in the left and right term of the screening condition in (\ref{condition}) are both unknown during the training process. 

Hence, the aim of our screening rule is to screen as many variables whose coefficients should be zero as possible by constructing a small and safe region for the left term of the screening condition in (\ref{condition}) with the unknown dual optimum and exploring the unknown order structure of the primal optimum for the right term of the screening condition in  (\ref{condition}). 

\subsection{Upper Bound for the Left Term}
In this part, we derive a tight upper bound for $|x^\top_{i} \theta^* |$ in (\ref{condition}) by utilizing the intermediate duality gap at each iteration during the training process. 

By the triangle inequality, we can derive the following bound as:
\begin{eqnarray} 
    |x^\top_{i} \theta^* | \leq |x^\top_{i} \theta | + \|x_{i}\| \|\theta^* - \theta\|. \label{test}
\end{eqnarray}

Note that the dual formulation $D(\theta)$ derived in (\ref{dual2}) is as follows:
\begin{eqnarray} 
    \max\limits_{\theta}  D(\theta):= - \frac{1}{2}\|\theta\|^2_{2} - \theta^\top y, \\
      s.t. \quad X^\top\theta \in C_\lambda \nonumber
\end{eqnarray}
and thus the dual $D(\theta)$ is a strongly concave function. We have the following Property \ref{property1}. 

\begin{property}
Dual $D(\theta)$ is strongly concave \emph{w.r.t.} $\theta$. Hence, we have 
\begin{eqnarray} \label{concave}
    D(\theta) \leq D(\theta^*)-\nabla D(\theta^*)^\top(\theta^* - \theta) - \frac{1}{2}\| \theta - \theta^* \|^2_{2}.
\end{eqnarray}
\label{property1}
\end{property}

Considering Property \ref{property1}, we can further bound the distance between the intermediate solution and the optimum of the dual in Corollary \ref{corollary1} based on the first-order optimality condition of constrained optimization.

\begin{corollary}
Suppose $\theta$ and $\theta^*$ are any feasible solution and the optimum of the dual respectively, we have:
\begin{eqnarray}
    \| \theta - \theta^* \| \leq \sqrt{2G(\beta, \theta)},
\end{eqnarray}
where $G(\beta, \theta) = P(\beta) - D(\theta)$ is the intermediate duality gap.\label{corollary1}
\end{corollary}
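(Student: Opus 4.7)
My plan is to chain together three ingredients: the strong concavity bound already stated in Property~\ref{property1}, the first-order optimality (variational) inequality for the constrained maximization of $D$, and weak duality.

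First, I would apply Property~\ref{property1} directly at the pair $(\theta,\theta^\star)$ to obtain
\begin{equation*}
D(\theta) \;\leq\; D(\theta^\star) - \nabla D(\theta^\star)^\top(\theta^\star-\theta) - \tfrac{1}{2}\|\theta-\theta^\star\|_2^2 .
\end{equation*}
Next, I would eliminate the gradient term using the first-order optimality condition for the constrained problem $\max_{\theta\in\mathcal{F}} D(\theta)$, where $\mathcal{F}=\{\theta:|X^\top\theta|\preceq\lambda_{r(\beta^\star)}\}$ is convex. Since $\theta^\star$ maximizes $D$ over $\mathcal{F}$ and $\theta$ is assumed feasible, the variational inequality gives $\nabla D(\theta^\star)^\top(\theta-\theta^\star)\leq 0$, i.e.\ $-\nabla D(\theta^\star)^\top(\theta^\star-\theta)\leq 0$. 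Substituting this back yields the clean bound
\begin{equation*}
\tfrac{1}{2}\|\theta-\theta^\star\|_2^2 \;\leq\; D(\theta^\star) - D(\theta).
\end{equation*}

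Finally, I would invoke weak duality, $D(\theta^\star)\leq P(\beta^\star)\leq P(\beta)$ for the feasible primal iterate $\beta$, to upper bound the right-hand side by the intermediate duality gap:
\begin{equation*}
D(\theta^\star)-D(\theta) \;\leq\; P(\beta)-D(\theta) \;=\; G(\beta,\theta).
\end{equation*}
Combining these two inequalities and taking square roots gives $\|\theta-\theta^\star\|\leq\sqrt{2G(\beta,\theta)}$, which is the desired conclusion.

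The main obstacle I anticipate is the justification of the variational inequality step, because the feasible set $\mathcal{F}$ is defined through $\lambda_{r(\beta^\star)}$, which depends on the unknown order structure of the primal optimum. However, for this corollary we only need $\mathcal{F}$ to be a fixed convex set containing both $\theta$ and $\theta^\star$; once feasibility of $\theta$ is assumed (as it is in the hypothesis), the standard KKT/variational inequality applies verbatim. The remaining steps are routine applications of weak duality and algebra.
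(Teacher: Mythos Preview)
Your proposal is correct and follows essentially the same route as the paper: Property~\ref{property1}, then the first-order optimality (variational) inequality at $\theta^\star$ to drop the gradient term, then duality to replace $D(\theta^\star)$ by $P(\beta)$. The only cosmetic difference is that the paper invokes ``strong duality'' for $P(\beta)\geq D(\theta^\star)$ whereas you (more precisely) note that weak duality plus optimality of $\beta^\star$ already suffices.
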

\begin{proof}
By the first-order optimality condition for strongly concave dual $D(\theta)$, we have:
\begin{eqnarray} 
    \nabla D(\theta^*)^\top(\theta^* - \theta) \geq 0.
\end{eqnarray}
Hence, based on (\ref{concave}), we have:
\begin{eqnarray} 
     \frac{1}{2}\| \theta - \theta^* \|^2_{2}  \leq D(\theta^*) - D(\theta).
\end{eqnarray}

By strong duality that $P(\beta ) \geq D(\theta^*)$, we have 
\begin{eqnarray} 
     \frac{1}{2}\| \theta - \theta^* \|^2_{2}  \leq P(\beta ) - D(\theta),
\end{eqnarray}
which completes the proof. 
\end{proof}

Hence, we can substitute $\|\theta - \theta^* \|$ in (\ref{test}) by Corollary \ref{corollary1} based on the intermediate duality gap and then derive the screening test with the upper bound for the left term  as follows:
\begin{eqnarray} 
 |x^\top_{i} \theta | + \|x_{i}\| \sqrt{2G(\beta, \theta)} < \lambda_{|\A^*|}.
\end{eqnarray}

The intermediate duality gap can be computed by $\beta$ and $\theta$. $\beta$ and $\theta$ can be easily obtained  in the original proximal gradient algorithms.

\subsection{Iterative Strategy for the Screening Rule}
The screening condition (\ref{condition}) only works when the order of the primal optimum is known in advance, which is unknown until we finish the training process in practice. To make the screening condition applicable, we design an efficient and effective iterative strategy to explore the order of the primal optimum with the unknown order structure. 

We can do screening test first as:
\begin{eqnarray} \label{screeningfirst}
    |x^\top_{i} \theta | + \|x_{i}\| \sqrt{2G(\beta, \theta)} < \lambda_{d} \Rightarrow \beta^*_{i} = 0.
\end{eqnarray}
According to the screening test above and for the following similarly, we can partition the variables into a safe active set $\mathcal{A}$ and a safe inactive set $\mathcal{A}’$ where the active set is the set of the variables that cannot be removed yet by our screening rule and the inactive set is the complementary set of the active set.

Suppose active set $\mathcal{A}$ has $m$ active features at iteration $k$, we can assign an arbitrary  permutation of $d-m$ smallest parameters $\lambda_{m+1}, \lambda_{m+2}, \ldots, \lambda_{d}$ to these screened coefficients without any influence to the final learned model. Thus, the order of these variables whose coefficients must be zero is known to be $d-m$ minimal absolute values of all. 

Then, with $m$ active features, by doing screening test as:
\begin{eqnarray} \label{screening}
    |x^\top_{i} \theta | + \|x_{i}\| \sqrt{2G(\beta, \theta)} < \lambda_{m} \Rightarrow \beta^*_{i} = 0,
\end{eqnarray}
we can find new active set $\mathcal{A}$ with $m'$ active features where $m' \leq m$ and further derive the order of the $m-m'$ screened variables by assigning the parameters similarly as above.

At each iteration, we repeat the screening test to explore the order of primal optimum until the active set keeps unchanged. The procedure of our iterative screening rule is summarized in Algorithm \ref{algorithm1}. 

The following Property \ref{property2} show our screening rule is safe to screen the variables whose coefficients should be zero with the unknown dual optimum and the unknown order structure of the primal optimum.  

\begin{property}
The iterative screening rule we proposed is guaranteed to be safe for Algorithm \ref{algorithm1} and the whole training process of OWL regression. \label{property2}
\end{property}

\begin{proof}
First, we prove our screening rule is safe for Algorithm \ref{algorithm1}. At the first iteration of Algorithm \ref{algorithm1}, active set $\mathcal{A}$ has total $d$ active features. We do screening test (\ref{screeningfirst}). Since $\lambda = [\lambda_{1}, \lambda_{2}, \ldots,\lambda_{d}]$ is a non-increasing vector, we have $\lambda_{d} \leq \lambda_{|\A^*|}$. Hence, the screening test above can make sure 
$|x^\top_{i} \theta | + \|x_{i}\| \sqrt{2G(\beta, \theta)} < \lambda_{|\A^*|}$. Thus, our screening test is safe at the first iteration.

Suppose our screening test is safe for the first $k$ iterations and active set $\mathcal{A}$ has $m$ active features at iteration $k$, the parameters of the $d-m$ screened variables whose coefficients should be zero at the optimum are assigned as a permutation of $[\lambda_{m+1}, \lambda_{m+2}, \ldots, \lambda_{d}]$. Then, the new regularization parameter vector for the variables that has not been screened is a permutation of $\lambda = [\lambda_{1}, \lambda_{2}, \ldots, \lambda_{m}]$. 

Thus, we can do the screening test for the left active variables as (\ref{screening}) to make sure $|x^\top_{i} \theta | + \|x_{i}\| \sqrt{2G(\beta, \theta)} < \lambda_{|\A^*|}$, which shows the screening test is safe at iteration $k+1$. Thus, our screening rule is proved to be safe for Algorithm \ref{algorithm1}.

For the latter sub-problem with less parameters and features to be solved in the iterative optimization algorithm, the way to do the screening test is similar to the original problem. Thus, following the proof above, we can easily prove that our screening rule is safe for the latter sub-problem and further for the whole training process of OWL regression, which completes the proof for Property \ref{property2}.

\end{proof}

\begin{algorithm}[ht]
\renewcommand{\algorithmicrequire}{\textbf{Input:}}
\renewcommand{\algorithmicensure}{\textbf{Output:}}
\caption{Safe Screening Rule for OWL Regression with Iterative Strategy}
\begin{algorithmic}[1]
\REQUIRE $\mathcal{A},\beta_{k},\theta_{k},G(\beta_{k},\theta_{k})$.
\WHILE{$\mathcal{A}$ still changes}
\STATE Do the screening test based on (\ref{screening}).
\STATE Update $\mathcal{A}$.
\ENDWHILE
\ENSURE New active set $\mathcal{A}$.
\end{algorithmic}
\label{algorithm1}
\end{algorithm}

\section{Screening Rule in the Proximal Gradient Algorithms}
In this section, we apply the screening rule to the APGD and SPGD algorithm in the batch and stochastic setting respectively for OWL regression.

\subsection{Proposed Algorithms}
In the batch setting, we compute the dual solution and duality gap first. Then, we compute the active set by Algorithm \ref{algorithm1} and update the solution as the original APGD algorithm with the obtained active variables. If active set $\mathcal{A}$ is updated in the current iteration, we also update the step size.  As the iteration increases, the solution is closer to the optimum and thus the duality gap also becomes smaller. Correspondingly, more inactive variables are screened by our screening rule. We present the procedures of our algorithm for the batch setting in Algorithm \ref{algorithm2}.

Similarly, in the stochastic setting, we compute the dual solution and duality gap in the main loop first. After that, we derive the active set by Algorithm \ref{algorithm1} and update the solution as the original SPGD algorithm with the obtained active variables. Let $F(\beta):=\frac{1}{2}\|y-X\beta\|^2_{2}$, we present the procedures of our algorithm for the stochastic setting in Algorithm \ref{algorithm3}.

\begin{algorithm}[ht]
\renewcommand{\algorithmicrequire}{\textbf{Input:}}
\renewcommand{\algorithmicensure}{\textbf{Output:}}
\caption{Accelerated Proximal Gradient Descent Algorithm  with Safe Screening Rules}
\begin{algorithmic}[1]
\REQUIRE $\beta^{0},b^{1}=\beta^{0},t_{1}=1$.
\FOR{$k=1,2,\ldots$}
\STATE Compute dual $\theta$ and duality gap.
\STATE Update $\mathcal{A}$ based on Algorithm \ref{algorithm1}.
\IF{$\mathcal{A}$ changes}
\STATE $t_{k} = t_{1}$.
\ENDIF
\STATE $\beta^{k} = \prox_{t_{k},{\lambda}}(b^{k}-t_{k}X^\top(X b^{k}-y))$.
\STATE $t_{k+1}=\frac{1}{2}(1+\sqrt{1+4 t_{k}^{2}})$.
\STATE $b^{k+1} = \beta^{k} + \frac{t_{k}-1}{t_{k+1}}(\beta^{k}-\beta^{k-1})$.
\ENDFOR
\ENSURE Coefficient $\beta$.
\end{algorithmic}
\label{algorithm2}
\end{algorithm}

\begin{algorithm}[ht]
\renewcommand{\algorithmicrequire}{\textbf{Input:}}
\renewcommand{\algorithmicensure}{\textbf{Output:}}
\caption{Stochastic Proximal Gradient Descent Algorithm with Safe Screening Rules}
\begin{algorithmic}[1]
\REQUIRE $\beta^{0},l$.
\FOR{$k=1,2,\ldots$}
\STATE Compute dual $\theta$ and duality gap.
\STATE Update $\mathcal{A}$ based on Algorithm \ref{algorithm1}.
\STATE $\beta = \beta^{k-1}$.
\STATE $\tilde{v} = \nabla F(\beta)$.
\STATE $\tilde{\beta}^{0} = \beta$.
\FOR{$t=1,2,\ldots,T$}
\STATE Pick mini-batch $I_{t} \subseteq X$ of size $l$.
\STATE $v_{t}=(\nabla F_{I_{t}}(\tilde{\beta}^{t-1}) - \nabla F_{I_{t}}(\beta))/l + \tilde{v}  $.
\STATE $\tilde{\beta}^{t} = \prox_{\eta,{\lambda}}(\tilde{\beta}^{t-1}-\eta v_{t})$.
\ENDFOR
\STATE $\beta^{k} = \tilde{\beta}^{T}$
\ENDFOR
\ENSURE Coefficient $\beta$.
\end{algorithmic}
\label{algorithm3}
\end{algorithm}

Interestingly, the duality gap, which is the main time-consuming step of our screening rule in Algorithm \ref{algorithm1}, has been computed by the original APGD and SPGD algorithms. Moreover, suppose the size of the active set for iteration $k$ is $d_{k}$, the computation complexity of the screening rule for each iteration is only $O(d_{k})$, which is even cheaper than the complexity of the original stopping criterion evaluation $O(d)$ and thus can be skipped for the analysis with the complexity $O(d_{k}(n+\log d_{k}))$ or $O(d_{k}(n+ T l + T \log d_{k}))$ for each iteration in the batch and stochastic setting respectively. 

More importantly, for iteration $k$ with $d_{k}$ active variables, our Algorithm \ref{algorithm2} only requires $O(d_{k}(n+\log d_{k}))$, which is much smaller than the complexity $O(d(n+\log d))$ required by the original APGD algorithm. Similarly, our Algorithm \ref{algorithm3} only requires $O(d_{k}(n+ T l + T \log d_{k}))$ for main loop $k$ where $T$ is number of the inner loop and $l$ is the size of mini-batch, which is much smaller than $O(d(n+T l + T \log d))$ required by the original SPGD algorithm. Hence, in high-dimensional sparse learning, the computation costs of both APGD and SPGD algorithms are effectively reduced by our screening rule.

\subsection{Theoretical Analysis}
In this part, we give the properties of convergence and screening ability when our screening rule is applied to standard iterative optimization algorithms.

In terms of the convergence, our algorithms have the following Property \ref{property3}.

\begin{property}
Suppose iterative algorithm $\Psi$ to solve OWL regression converges to the optimum, algorithm $\Psi$ with our screening rule to solve OWL regression also converges to the optimum. \label{property3}
\end{property}

\begin{proof}
We denote the sub-problem at iteration $k$ as $P_{k}$. First, we know $\Psi$ converges to the optimum for $P_{1}$. Then, suppose algorithm $\Psi$ with the screening rule converges to the optimum for $P_{k}$. Considering iteration $k+1$, $P_{k+1}$ is a sub-problem of $P_{k}$. Thus, the convergence of $P_{k+1}$ can be guaranteed as $P_{k}$, which completes the proof. 
\end{proof}
Property \ref{property3} shows the convergence of standard iterative optimization algorithms with our screening rules can be guaranteed by the original algorithms. Thus, our screening rule can be combined with existing iterative optimization algorithms, \emph{e.g.}, APGD, SPGD and \emph{et al.}.

In terms of the screening ability, our algorithms have the following Property \ref{property4} and \ref{property5}.
\begin{property}
$\theta$ converges to $\theta^*$ of the dual if $\beta$ converges to $\beta^*$ of the primal. \label{property4}
\end{property}

\begin{proof}
Considering the maximization part of (\ref{maximization}) as follows:
\begin{eqnarray} 
   \max\limits_{\theta} - \frac{1}{2}\|\theta\|^2_{2} - \theta^\top(y-X\beta),
\end{eqnarray}
we can get the primal-dual link equation as:
\begin{eqnarray} 
   \theta^* = X\beta^* - y.
\end{eqnarray}
Thus, as $\beta$ converges to $\beta^*$ of the primal, $\theta$ converges to $\theta^*$ of the dual.
\end{proof}

Property \ref{property4} shows the convergence of the dual can be guaranteed by the convergence of the primal, which means the intermediate duality gap becomes smaller as the iteration increases and thus our screening rule is promising to screen more inactive variables. 

Further, we give Property \ref{property5} to show the excellent screening ability of our screening rule. 

\begin{property}
Based on the optimality conditions, we have that final active set $ \mathcal{A}^{*}$ satisfies that  $ -X_{\A^*}^\top\theta^*   \in \partial \Omega_{\lambda_{\A^*}}(\beta^{*}_{\A^*}) $. Then, as algorithm $\Psi$ converges, there exists an iteration number $K_{0} \in \mathbb{N}$ s.t. $\forall k \geq K_{0}$, any variable $j \notin \mathcal{A}^{*}$ is screened by our screening rule. \label{property5}
\end{property}

\begin{proof}
As $\Psi$ converges, owing to the strong duality, the intermediate duality gap converges towards zero. Thus, for any given $\epsilon$, there exists $K_{0}$ such that $\forall k \geq K_{0}$, we have 
\begin{eqnarray} \label{converge1}
    \| \theta^{k} - \theta^* \|_{2} \leq  \epsilon,
\end{eqnarray}
and
\begin{eqnarray} \label{converge2}
 \sqrt{2G(\beta^{k},\theta^{k})}\leq  \epsilon. 
\end{eqnarray}

For any $j \notin \mathcal{A}^{*} $, we have 
\begin{eqnarray} 
&& |x^\top_{j} \theta^{k} | + \|x_{j}\| \sqrt{2G(\beta^{k},\theta^{k})} \\
& \leq &  |x^\top_{j} (\theta^{k} \!-\!\theta^*) | + |x^\top_{j} \theta^* | + \|x_{j}\| \sqrt{2G(\beta^{k},\theta^{k})} \nonumber \\
& \leq &  2 \|x_{j}\|\epsilon + |x^\top_{j} \theta^* | \nonumber
\end{eqnarray}

The first inequality is obtained by the triangle inequality and the second inequality is obtained by (\ref{converge1}) and (\ref{converge2}). Thus, if we choose  
\begin{eqnarray} 
    \epsilon < \frac{\lambda_{|\mathcal{A}^{*}|}-|x^\top_{j} \theta^* | }{2\|x_{j}\|} 
\end{eqnarray}
where $\lambda_{|\mathcal{A}^{*}|}-|x^\top_{j} \theta^* | > 0 $ is easily obtained since  $j \notin \mathcal{A}^{*}$, we have $|x^\top_{j} \theta^{k} | + \|x_{j}\| \sqrt{2G(\beta^{k}, \theta^{k})} < \lambda_{|\mathcal{A}^{*}|} $, which is the screening rule we proposed. That is to say, variable $j$ is screened out by our screening rule at this iteration, which completes the proof.
\end{proof}

Property \ref{property5} shows all the inactive variables $j \notin \mathcal{A}^{*}$ are correctly detected and effectively screened by our screening rule in a finite number of iterations.

\section{Experiments}
In this section, we first give the experimental setup and then present our experimental results with discussions.

\subsection{Experimental Setup}
\subsubsection{Design of Experiments}
We conduct experiments on six real-world benchmark datasets not only to verify the effectiveness of our algorithm on reducing running time, but also to show the effectiveness and safety on screening inactive variables.

To validate the effectiveness of our algorithms on reducing running time, we evaluate the running time of our algorithms and other competitive algorithms to solve OWL regression under different settings. To confirm the effectiveness and safety of our algorithms on screening inactive variables, we evaluate the screening rate at each iteration of our algorithm and the prediction errors of different algorithms. The compared algorithms are summarized as follows:
\begin{itemize}[leftmargin=0.2in]
  \item APGD: Accelerated proximal gradient descent algorithm \cite{bogdan2015slope}.
  \item APGD + Screening: Accelerated proximal gradient descent algorithm with the safe screening rule.
  \item SPGD: Stochastic proximal gradient descent algorithm with variance reduction we adopt in \cite{xiao2014proximal}. 
  \item SPGD + Screening: Stochastic proximal gradient descent algorithm with variance reduction and the safe screening rule.
\end{itemize}

\begin{table}[t]
\setlength{\tabcolsep}{1pt} 
\caption{The real-world datasets used in the experiments.}
\vskip 0.15in
\begin{center}
\begin{sc}
\begin{tabular}{lcc}
\toprule
Dataset & Sample size & Attributes \\
\midrule

    Duke Breast Cancer & 44  & 7129    \\

    Colon Cancer    & 62 & 2000      \\

    Cardiac Left    & 3360       & 1600  \\

    Cardiac Right &  3360       & 1600  \\

    IndoorLoc Longitude     & 21048 & 529      \\

    Slice Localization & 53500 & 386      \\

\bottomrule
\end{tabular}
\end{sc}
\end{center}
	\label{table:datasets}
\end{table}

\begin{table*}[t]
\footnotesize
\vspace{-6pt}
\setlength{\tabcolsep}{1pt} 
\caption{Prediction errors of different algorithms.}
\vskip 0.15in
\begin{center}
\begin{sc}
\setlength{\tabcolsep}{4.9mm}{
\begin{tabular}{lcccc}
\toprule
Dataset & APGD & APGD + Screening & SPGD & SPGD + Screening \\
\midrule

 Duke Breast Cancer & 0.6523  & \textbf{0.6523} & 0.6523  & \textbf{0.6523}    \\

 Colon Cancer     & 0.9453 & \textbf{0.9453} & 0.9453 & \textbf{0.9453}      \\

  Cardiac Left    & 0.4756 & \textbf{0.4756}    & 0.4756 &\textbf{0.4756}   \\

Cardiac Right &  0.5276 & \textbf{0.5276} &  0.5276      &  \textbf{0.5276}  \\

IndoorLoc Longitude    & 0.5531 &\textbf{0.5531} & 0.5531 & \textbf{0.5531}      \\

Slice Localization & 0.6162 &\textbf{0.6162} & 0.6162 & \textbf{0.6162}      \\

\bottomrule
\end{tabular}}
\end{sc}
\end{center}
	\label{table:errors}
\end{table*}

\begin{figure*}[!t]
  \centering
  \subfigure[Duke Breast Cancer]{
    \includegraphics[height=3.8cm]{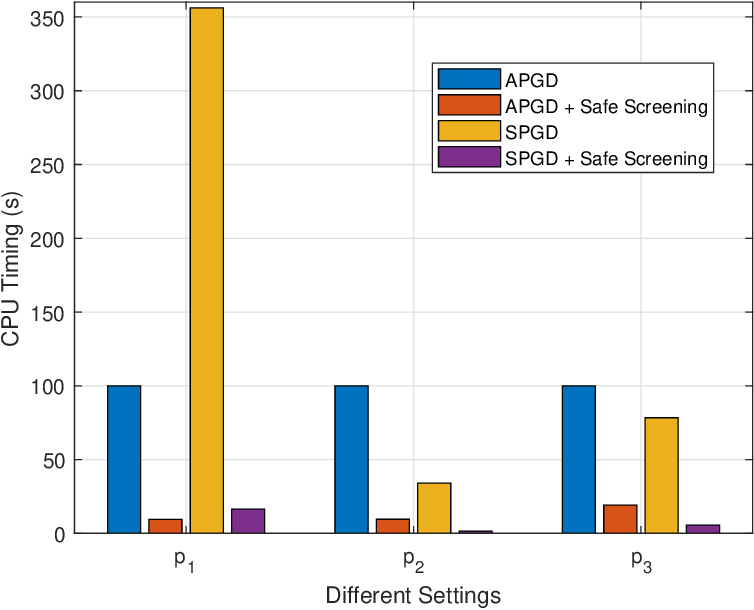}}
\qquad  \subfigure[Colon Cancer]{
    \includegraphics[height=3.8cm]{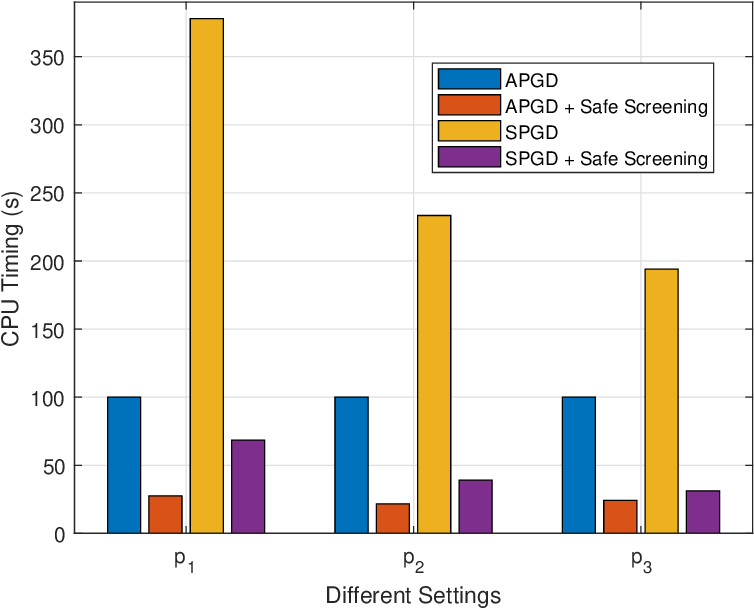}}
\qquad    \subfigure[Cardiac Left ]{
    \includegraphics[height=3.8cm]{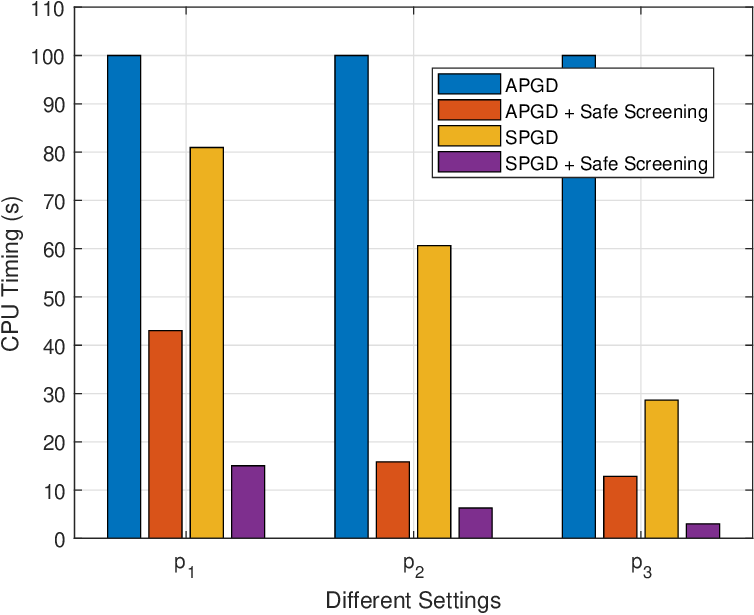}}
     \subfigure[Cardiac Right]{
    \includegraphics[height=3.8cm]{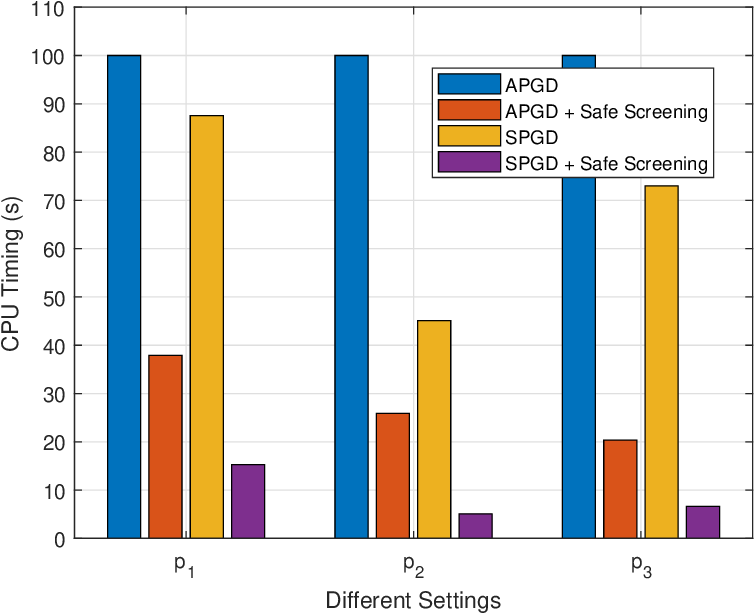}}
\qquad       \subfigure[IndoorLoc Longitude]{
    \includegraphics[height=3.8cm]{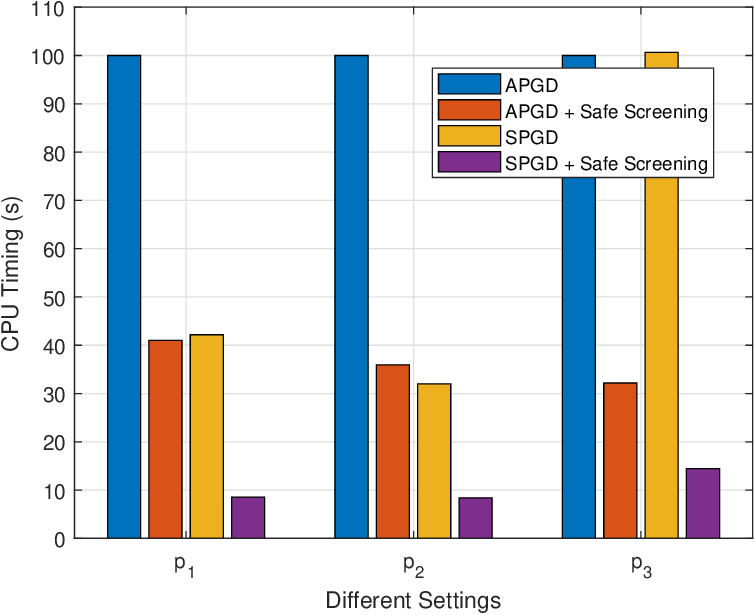}}
\qquad       \subfigure[Slice Localization ]{
    \includegraphics[height=3.8cm]{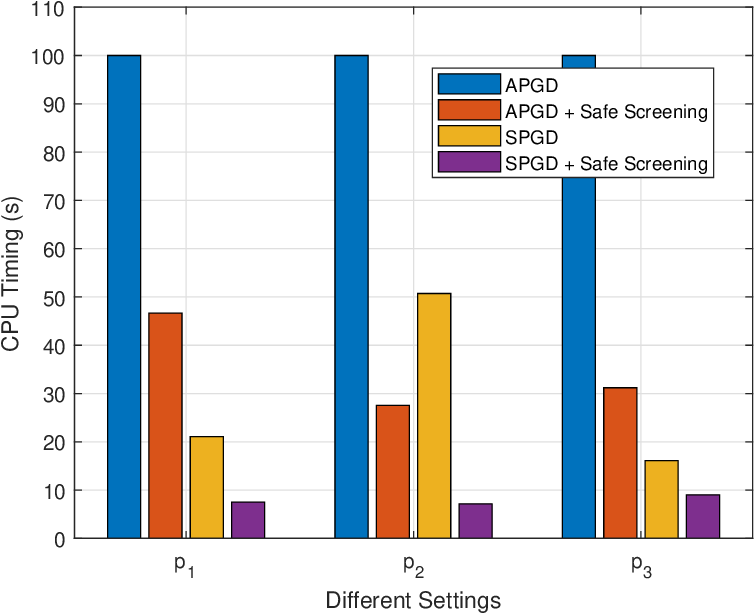}}
  \caption{Average running time of different algorithms without and with safe screening rules under different settings.}
\label{fig1} 
\end{figure*}

\begin{figure*}[!t]
  \centering
  \subfigure[Duke Breast Cancer]{
    \includegraphics[height=3.8cm]{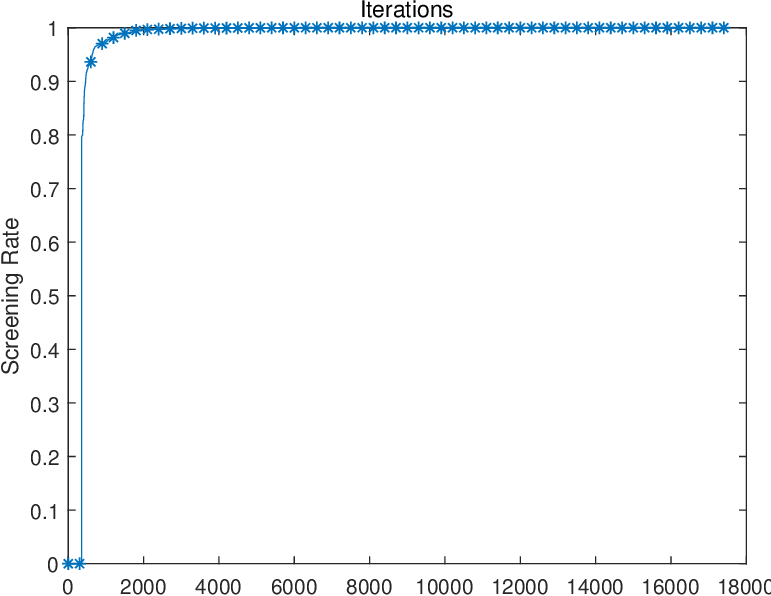}}
\qquad  \subfigure[Colon Cancer]{
    \includegraphics[height=3.8cm]{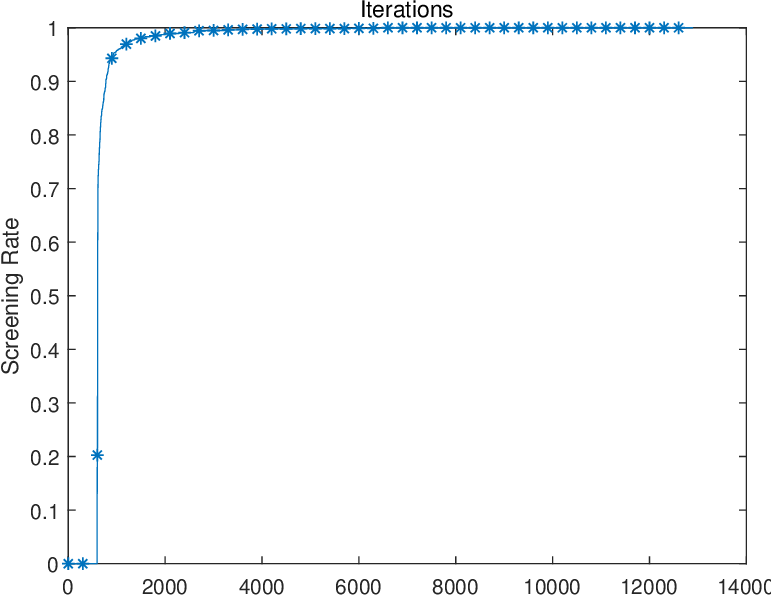}}
\qquad  \subfigure[Cardiac Left]{
    \includegraphics[height=3.8cm]{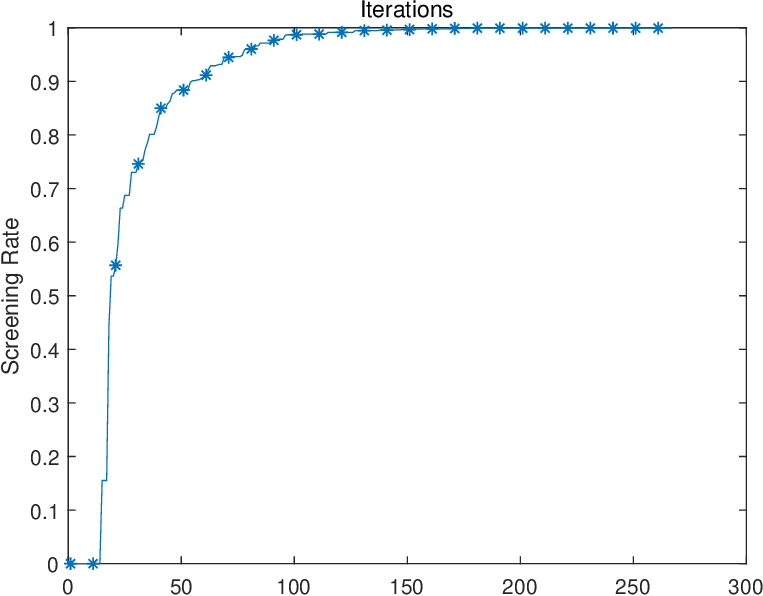}}
     \subfigure[Cardiac Right]{
    \includegraphics[height=3.8cm]{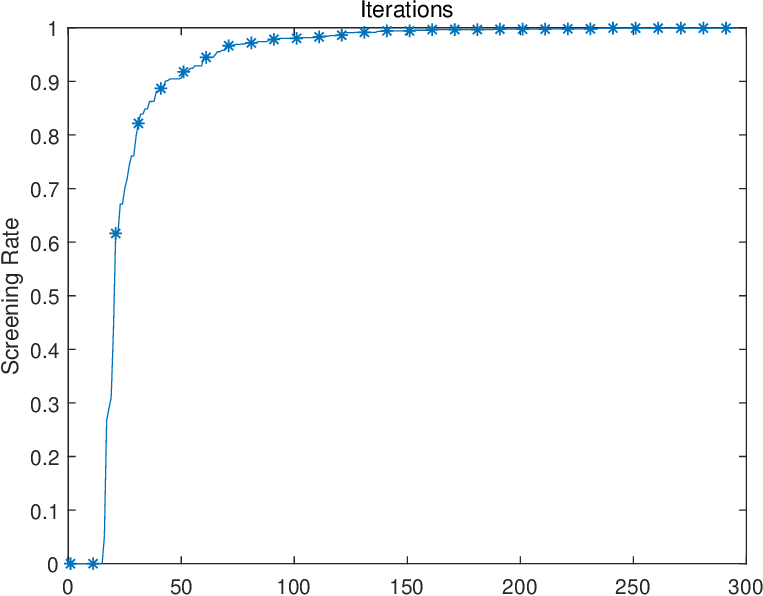}}
\qquad     \subfigure[IndoorLoc Longitude]{
    \includegraphics[height=3.8cm]{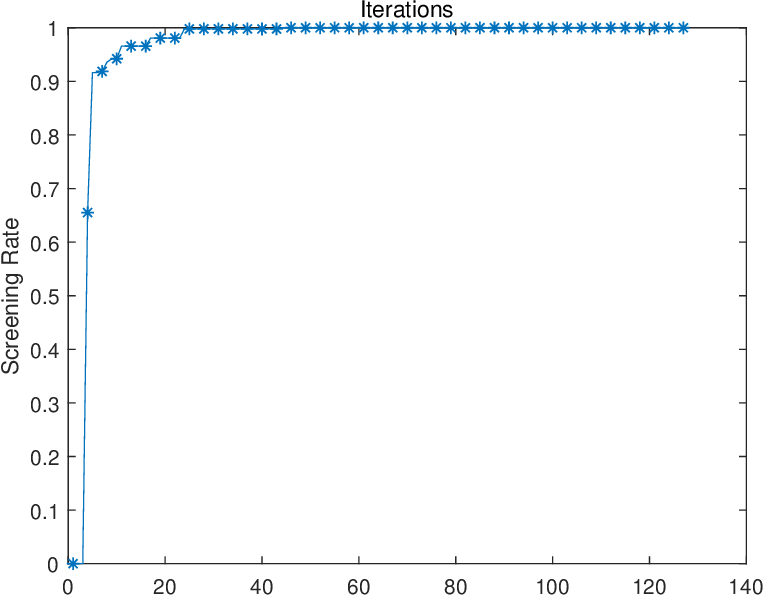}}
\qquad     \subfigure[Slice Localization]{
    \includegraphics[height=3.8cm]{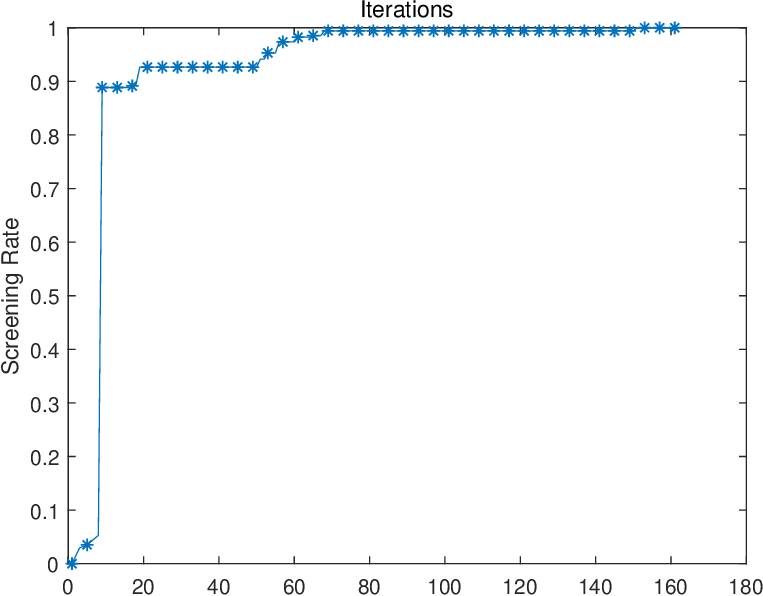}}
  \caption{The screening rate of different datasets in the stochastic setting.}
\label{fig2} 
\end{figure*}

\subsubsection{Implementation Details}
Our experiments were performed on a 4-core Intel i7-6820 machine. We implement all the algorithms in MATLAB and compare the average running CPU time of different algorithms at the same platform for 5 trials. For the comparison convenience, the CPU time of each algorithm is shown as the percentage of APGD under each setting. Following the setting in \cite{bogdan2015slope}, tolerance error $\epsilon$ of duality gap and dual infeasibility in our experiments are set as $10^{-6}$. At the very early stage, the solution is far from the optimum and thus the screening rule can only screen a small portion of variables. We run our algorithms with a warm start. Please note all the experimental setup in Algorithm \ref{algorithm2} and \ref{algorithm3} follows the original APGD and SPGD algorithms with the same hyperparameters of the size of mini-batch, the number of inner loop and step size $\eta$, which range from 5 to 100, 5 to 80 and $10^{-6}$ to $10^{-3}$ respectively for different datasets, are selected by grid search.  

We use the popular OSCAR setting (also called OWL regression with linear decay), which is widely used in \cite{oswal2016representational, zhong2012efficient, zhang2018learning}, as follows:
\begin{eqnarray} 
    \lambda_{i} = \alpha_{1} + \alpha_{2}(d-i),
\end{eqnarray}
where $\alpha_{1} = p_{i} \|X^\top y\|_{\infty}$ and $\alpha_{2} = \alpha_{1}/d$. For a fair comparison, the factor $p_{i}$ is used to control the sparsity. In our experiments, we set $p_{i} = i * e^{-\tau}$, $i = 1,2, 3$, $\tau =2 $ for Duke Breast Cancer, IndoorLoc Longitude and Slice Localization datasets and $\tau =3 $ for Colon Cancer, Cardiac Left and Cardiac Right datasets. 

To evaluate the screening rate of our algorithms, the screening rate is defined as the percentage of the inactive variables we screened to the total inactive ones. We set the sparsity as $p_{1}$ here and for the following part.

To compare the prediction error of different algorithms, we randomly divide the dataset into the training and testing set in proportion to $4:1$ and use root mean squared error (RMSE) as the performance criterion of the linear regression tasks.  

\subsubsection{Datasets}
Table \ref{table:datasets} summarizes six benchmark datasets used in our experiments. Duke Breast Cancer and Colon Cancer datasets are from the LIBSVM repository, which is available at \url{https://www.csie.ntu.edu.tw/~cjlin/libsvmtools/datasets/}. IndoorLoc Longitude and Slice Localization datasets are from the UCI benchmark repository \cite{Dua:2019}, which is available at \url{https://archive.ics.uci.edu/ml/datasets.php}. Cardiac Left and Cardiac Right datasets are collected from 3360 MRI images by hospitals \cite{gu2014incremental}.

\subsection{Experimental Results and Discussions}
\subsubsection{Running Time}
Figures \ref{fig1}(a)-(f) provide the results of the average running time of four algorithms on the six datasets for the OWL regularized regression tasks in different situations. The results confirm that the methods with our screening rule are always much faster than the original ones both in the batch and stochastic settings. This is because our screening rule could screen a large portion of inactive variables during the training process. Thus, the algorithms with our screening rule reduce much computational cost of the original algorithms.

When $n \ll d$, the results show, with our safe screening rule, APGD algorithm achieves the computational gain to the original algorithm by a factor of 4x to 8x and SPGD algorithm achieves the computational gain to the original one by 5x to 22x. For large-scale learning where $n \approx d$ or $n \gg d$, the results show SPGD algorithm with our safe screening rule always achieve the largest computational gain, which can accelerate the original APGD algorithm by 4x to 40x. This is because the stochastic methods can reduce computational burden in large-scale learning. Interestingly, with our screening rule, stochastic methods could achieve significant computational gain even when $n \approx d$. This is because the problem degenerates into a sub-problem that $n \gg d$ by screening inactive variables during the training process.  Also note we benefit from the screening rule more with larger and sparser datasets.

\subsubsection{Screening Rate}
Figures \ref{fig2}(a)-(f) present the results of the screening rate of our algorithms on six datasets in the stochastic setting to show the screening ability and characteristics of our screening rule. The results support the conclusion that our algorithm can successfully screen most of the inactive variables at the very early stage, reach the final active set and screen almost all the inactive variables in a finite number of iterations and thus is an effective method to screen inactive variables of OWL regression. This is because the upper bound of our screening test is very tight and the iterative strategy is effective to explore the order structure of primal solution to screen more inactive variables during the training process.

\subsubsection{Prediction Error}
Table \ref{table:errors} provides the results of prediction errors of four algorithms on six datasets for OWL regularized regression to confirm the safety of our screening rule.  According to the experimental results, the prediction errors of our algorithms are identical with the original algorithms. The reason is that our screening rule is guaranteed to be safe and thus our algorithms with our screening rule are guaranteed to yield the exactly same solution as the original ones.

\section{Conclusion}
In this paper, we propose the first safe screening rule for OWL regression by effectively tackling the non-separable penalty, which allows to avoid the useless computation of the parameters whose coefficients must be zero. Moreover, the proposed screening rule can be easily applied to existing iterative optimization algorithms. Theoretically, we prove that the algorithms with our screening rule is able to guarantee identical results with the original algorithms. Extensive experiments on six benchmark datasets verify that the screening rule leads to significant computation gain without any loss of accuracy by screening inactive variables.

\bibliography{123}
\bibliographystyle{icml2020}

\end{document}